\renewcommand{\baselinestretch}{1.3}
\def\singlespace{\def\baselinestretch{1}\@normalsize}
\newtheorem{theorem}{Theorem}
\renewcommand{\hat}{\widehat}
\def\singlespace{\def\baselinestretch{1}\@normalsize}
\newdimen\biblioindent    \biblioindent=30pt
\def\beq{\begin{equation}}
\def\eeq{\end{equation}}
\def\beqn{\begin{eqnarray}}
\def\eeqn{\end{eqnarray}}
\def\beqnn{\begin{eqnarray*}}
\def\eeqnn{\end{eqnarray*}}
\begin{document}

\title{Enhancing Robustness of Gradient-Boosted Decision Trees through One-Hot Encoding and Regularization\thanks{The views expressed in the paper are those of the authors and do not represent the views of Wells Fargo.}}

\author{Shijie Cui$^a$, Agus Sudjianto$^a$, Aijun Zhang$^a$ and Runze Li$^b$\\
  $^a$Corporate Model Risk, Wells Fargo, Charlotte, NC 28202, USA\\
\{shijie.cui, agus.sudjianto, aijun.zhang\}@wellsfargo.com\\
$^b$Department of Statistics, Pennsylvania State University\\
University Park, PA 16802, USA\\
rzli@psu.edu\\}
\date{}
\maketitle
\begin{abstract}
Gradient-boosted decision trees (GBDT) are widely used and highly effective machine learning approach for tabular data modeling. However, their complex structure may lead to low robustness against small covariate perturbation in unseen data. In this study, we apply one-hot encoding to convert a GBDT model into a linear framework, through encoding of each tree leaf to one dummy variable. This allows for the use of linear regression techniques, plus a novel risk decomposition for assessing the robustness of a GBDT model against covariate perturbations. We propose to enhance the robustness of GBDT models by refitting their linear regression forms with $L_1$ or $L_2$ regularization. Theoretical results are obtained about the effect of regularization on the model performance and robustness. It is demonstrated through numerical experiments that the proposed regularization approach can enhance the robustness of the one-hot-encoded GBDT models.
\end{abstract}

\section{Introduction}
Gradient-boosted decision trees (GBDT) models have gained increasing popularity in a variety of business applications due to its better predictive performance compared to traditional statistical models. One reason for this is the flexibility of GBDT in capturing non-linear relationships between dependent and independent variables, as well as interactions among different independent variables. GBDT is a special case of ensemble models, which involve training multiple models and combining their predictions to obtain a final prediction. The ability to model complex relationships and interactions makes GBDT  a powerful tool for a wide range of predictive modeling tasks.

The final model produced by GBDT is often considered a “black box” model because the combination of decision trees leads to a complex model structure that is difficult for humans to understand. In general, the capacity of the model to fit the training data increases as its model complexity increases.
Traditional statistical models encounter overfitting problems when models are too complex. However, in modern machine learning, evidence suggests that some models can still predict well even with a perfect fit to training data, leading to the construction of increasingly complex models. This phenomenon has been referred to as ``benign overfitting" and has been discussed in the literature \citep{bartlett2020benign, tsigler2020benign, shamir2022implicit}.

Is benign overfitting really benign? In machine learning, models are typically evaluated based on their accuracy, such as test mean squared error for regression tasks or AUC for classification tasks.  However, another crucial property of machine learning models is their robustness against small perturbations in the input data. Unfortunately, some existing works have shown that many popular machine learning models are susceptible to adversarial attacks if not designed carefully, highlighting the importance of robustness.
Researchers find evidence that with increased capacity comes the risk of overfitting,  the model becomes highly sensitive to noise or random variations in the data. For instance, studies by \citet{globerson2006nightmare}, \citet{szegedy2013intriguing}, \citet{goodfellow2014explaining}, \citet{andriushchenko2019provably}, \citet{carlini2017towards} and \citet{javanmard2020precise}  have demonstrated that deep learning models can be vulnerable to adversarial examples, where small changes to the data can lead to incorrect predictions.  In this paper, we focus on the robustness of machine learning models (in terms of prediction rather than parameter estimation), defined as their ability to persist noise, and differentiate this concept from the robustness analysis of estimators in traditional statistics.

For some commonly used interpretable models such as linear or generalized linear models, it is often easy to evaluate the robustness because we can trace the impact of features to the response due to interpreatbility of these models. However, the simplicity of such models can limit their capability in modeling complex relationship between the response and the predictors in real-world applications. Complex machine learning models like tree ensembles are often used to gain better predictive performance. There are existing works for evaluating the robustness of machine learning models
\citep{nicolae2018adversarial, carlini2019evaluating, chen2019robustness}. However, most of them only focus on empirical analyses and intuitive observations. There lacks a systematic approach for evaluating robustness of complex machine learning models. This paper aims to bridge the gap in robustness analysis between interpretable models and machine learning models, specifically the GBDT models and help develop robustness testing tools for machine learning models.  The robustness testing tools have been implemented in the open-access PiML toolbox by \citet{sudjianto2023piml}.

We focus on the importance of model robustness evaluation and highlight the potential of one-hot encoding with regularization in developing more reliable and accurate machine learning models. This paper makes the following contributions to the machine learning literature.
\begin{itemize}
\item We demonstrate that the so-called benign overfitting GBDT models may not be robust to data perturbations, through numerical examples and traditional bias-variance decomposition.

\item We develop a novel risk decomposition method to analyze the robustness of GBDT models by applying the one-hot encoding technique.

\item We propose to enhance the robustness of GBDT models by refitting their linear regression forms with $L_1$ or $L_2$ regularization, and provide theoretical analysis of the regularization effectiveness on model robustness and performance.
\item We show through numerical studies that the proposed approach can improve the robustness of GBDT models against covariate perturbations in testing data.
\end{itemize}

The rest of this paper is organized as follows. In Section \ref{sec202303271434}, we evaluate the robustness of the GBDT model and demonstrate that a complex GBDT model may not be robust with respect to data perturbation. In Section \ref{sec202302161144}, we propose GBDT OHE (One-hot-encoded GBDT) and show how a GBDT model can be expressed as a linear model using GBDT OHE, and propose a novel risk decomposition method when the model has perturbations to analyze robustness of a GBDT model. In Section \ref{sec202303271438}, we integrate regularization and the GBDT OHE techniques, and study its impact on model robustness and performance theoretically. In Section \ref{sec202303131258},  we present real data analyses to showcase the performance of our proposed approach.  In Section \ref{sec202303291157} we conclude our results and give potential future work.
Appendix I presents an introduction of data sets we used for real data analysis.
Appendix II consists of the final tuning parameter results of different models fitted in Section~\ref{sec202303131258}.

\section{GBDT Model Robustness}\label{sec202303271434}
We begin with an illustration of the robustness weakness of traditional GBDT models, using bias and variance decomposition and some numerical examples. 
Bias and variance are two important sources of model performance in machine learning. Here and hereafter bias refers to the systematic error that arises when a model is unable to capture the true relationship between input variables and the output variable, and variance refers to the variance of error that arises due to model sensitivity to variations in the training data. 

Consider a regression model 
$$
y = f(x)+\varepsilon,
$$ where $\varepsilon \sim P_\varepsilon$ is a noise term with zero mean and variance $\sigma^2_\varepsilon.$ Suppose $E_\mathcal{D}$ is the expectation on training data and $E_\mathcal{X}$ is the expectation of new observations or testing data (i.e. data not used for training model). To evaluate the performance of a fitted model $\hat{f}$, we decompose the risk as
\begin{eqnarray}
&&E_\mathcal{X}E_\mathcal{D}(y-\hat{f}(x))^2\nonumber\\
&=&E_\mathcal{X}[f(x)-E_{\mathcal{D}}(\hat{f}(x))]^2+E_\mathcal{X}E_\mathcal{D}[E_{\mathcal{D}}(\hat{f}(x))-\hat{f}(x)]^2+\sigma_\varepsilon^2\\
&=:&\text{Bias}^2 + \text{Variance} +\text{Irreducible error},
\end{eqnarray}
where $E_\mathcal{X}[f(x)-E_{\mathcal{D}}(\hat{f}(x))]^2$ is the square of model bias measuring how the fitted model deviates from the true model,  $E_\mathcal{X}E_\mathcal{D}[E_{\mathcal{D}}(\hat{f}(x))-\hat{f}(x)]^2$ is the model variance measuring  the sensitivity of the fitted model when the training data changes, and the irreducible error $\sigma_\varepsilon^2$ represents the minimum achievable error that cannot be reduced by any model. This error arises from the noise in the data and cannot be eliminated by any modeling technique. The bias-variance trade-off describes the balance between bias and variance that machine learning models traditionally strike to perform well.  Models that are too simple may underfit the data, resulting in large bias and large errors on both training and testing data. Conversely, models that are too complex may overfit the data, leading to low bias and low errors on the training data but high errors on the testing data.  Traditionally, in statistical learning theory, it has been observed that the risk function tends to be U-shaped, with a minimum point that corresponds to the best trade-off between bias and variance. Finding a model that strikes a balance between bias and variance can be achieved through techniques such as regularization, cross-validation, and model selection. Regularization, for example, can be used to control model complexity to reduce variance, while cross-validation can be used to assess the generalization performance of models  under different regularization sizes and help select the proper amount of regularization. 
\begin{figure}
	\centering
	\subfigure{
		\begin{minipage}[b]{0.4\textwidth}
			\includegraphics[width=1\textwidth,height=7cm]{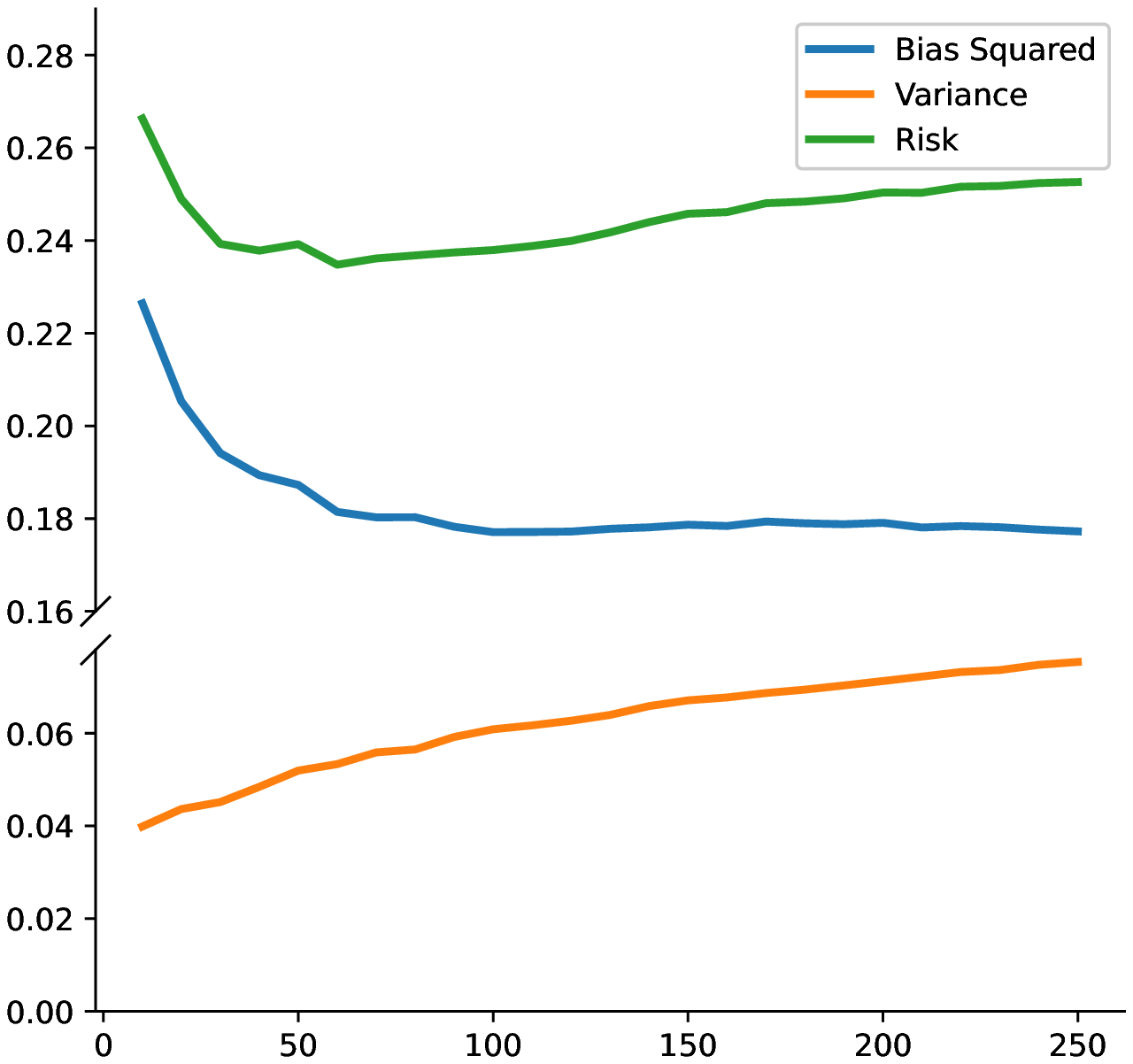}
		\end{minipage}
	   }
    	\subfigure{
    		\begin{minipage}[b]{0.4\textwidth}
   		 	\includegraphics[width=1\textwidth,height=7cm]{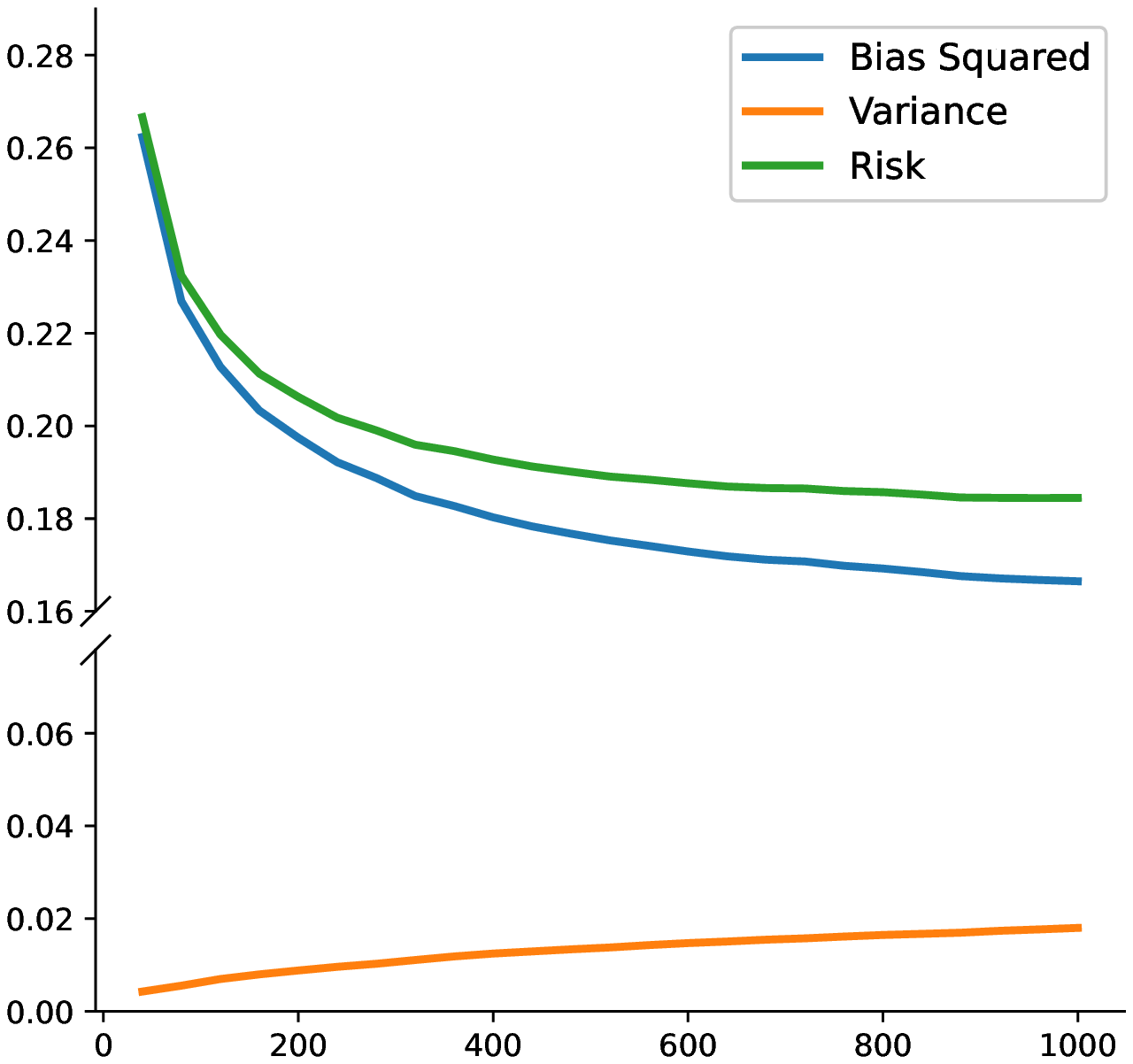}
    		\end{minipage}
    	}
    	\caption{Left panel depicts the risk function along with bias$^2$ and variance for large learning rate (i.e. learning rate equals 1), and the risk function is U-shaped. Right panel is for small learning rate (learning rate is 0.1). The risk function does not have U-shape.}\label{fig202303131328}
\end{figure}
In deep learning, it is not always the case that the risk function is a typical U-shaped as model complexity changes. In some cases, the variance curve may become flat or increase very slowly as the model becomes more and more complex, leading to ``benign overfitting". To understand this phenomenon, we fit XGBoost (eXtreme Gradient Boosting, proposed by \citet{chen2016xgboost}) models with different learning rates on the California House Pricing (CHP) dataset. The curves of risk function, 
squared bias and variance are depicted in Figure~\ref{fig202303131328} with learning rate
1.0 and 0.1. Figure~\ref{fig202303131328} shows that the XGBoost model with a large learning rate 1.0 has a U-shaped risk function, while the model with a small learning rate 0.1 does not. Note that in Figure \ref{fig202303131328},  we mix irreducible error with the squared bias term since it is not possible to separate them for real-world applications. 

With models becoming more and more complex, it is of great interest to understand how the robustness of the models changes.
To illustrate the impact of model complexity on robustness, we conduct a simple experiment where we add small perturbations to the testing data to evaluate the model robustness. Specifically, we perturb each predictor in the data by adding a normally distributed perturbation with a standard deviation be $5\%$ of the standard deviation of  original predictor. Figure \ref{fig202303131335} depicts the new risk function of XGBoost model  with small learning rate 0.1. When we add perturbations to the data, the U-shaped trend re-emerged, indicating that complex models are less robust to perturbation. 

It is natural to ask: 1) how to analyze the robustness of a GBDT model, and 2) how to enhance the robustness of a GBDT model? These two questions define our scope of study, and they will be addressed in subsequent sections of this paper.

\begin{figure}[t]
\centering
\includegraphics[width=0.5\textwidth,height=0.5\textwidth]{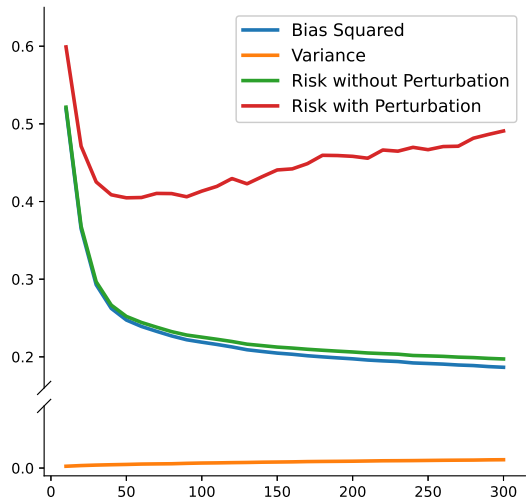}
\caption{U-shape when the learning rate is small but with perturbation.}\label{fig202303131335}
\end{figure}

\section{GBDT OHE and Risk Decomposition}\label{sec202302161144}

To analyze and enhance the robustness of a GBDT model, we propose to apply one-hot encoding to GBDT models, and name this technique as GBDT OHE.  It involves representing gradient-boosted decision trees as a linear model by encoding each tree leaf to one dummy variable. We propose a novel risk decomposition approach that includes a new perturbation term, which enable us to analyze the robustness of a GBDT model. 

\subsection{One-hot Encoding}

A GBDT model has a complicated structure as a whole, but the individual decision trees that make up the model are generally easy because each leaf rule is clear and understandable. It is remarkable that if we one-hot encode the leaves of a GBDT into dummy variables, the prediction of each GBDT model can be viewed as a linear combination of the leaf nodes. We refer to it as the GBDT OHE method. To see this, for a given data set with $n$ observations and $q$ predictors 
$\mathcal{D}=\{(\mathbf{x}_i,y_i ), |\mathcal{D}|=n,\mathbf{x}_i \in \mathcal{X} \subseteq\mathbb{R}^q, y_i\in \mathcal{Y}\subseteq\mathbb{R} \}$. Suppose that $\mathbf{x}_i$'s are identically independently generated from a random distribution $P_\mathcal{\mathbf{x}}$. Regression function of $y_i$ on $\mathbf{x}_i$ is assumed to be 
\begin{eqnarray}
y_i=f(\mathbf{x}_i)+\varepsilon_i,
\end{eqnarray} where $\varepsilon_i \sim P_\varepsilon$ is a noise term with zero mean and variance $\sigma^2_\varepsilon.$  $\varepsilon_i$ is assumed to be independent of $\mathbf{x}_i$.
Suppose that $y_i$ is a scalar response for problem with continuous response and suppose $(\mathbf{x}_i,y_i)\sim P_{\mathbf{x},y}.$ Further suppose $(\mathbf{x},y)\sim P_{\mathbf{x},y}$ is a new sample representing a testing data point.

A GBDT model assumes that the regression function is an ensemble of $M$ additive trees. GBDT model takes the form like below: 
\begin{eqnarray}\label{eqn202302150323}
F_M (\mathbf{x}_i )=\gamma_0+\gamma\sum_{m=1}^M f_m(\mathbf{x}_i),f_m\in \mathcal{F}, 
\end{eqnarray}
where $M$ is the size of the ensemble and $\mathcal{F}$ is a collection of decision trees. We assume each base learner $f_m$ is a function from $\mathcal{F}$. $\gamma$ is a learning rate. Number of trees and learning rate are both hyperparameters need to be tuned. To learn the set of trees $f_m$  used in the model, a loss criterion $L(F_M (\mathbf{x}_i ),y_i)$ should be specified. For data with continues response, $L$ is specified to be squared error that $L(F_M (\mathbf{x}_i ),y_i)= \{y_i-F_M (\mathbf{x}_i )\}^2$. To estimate model parameters, we minimize the average loss: 
\begin{eqnarray}
    \mathcal{E}_{train}(F_M)=\frac{1}{n}\sum_{i=1}^n L(F_M(\mathbf{x}_i,y_i)).
\end{eqnarray}

GBDT begins with initializing the model with a particular constant $\gamma_0$, which can be either treated as a hyperparameter or determined by minimizing loss function. Next, a tree $f_1$ is constructed to minimize the loss function from the training sample. To reduce overfitting, the tree is modified by multiplying a parameter $\gamma\in[0,1]$ (usually called learning rate). Model is updated by adding the new tree $(\gamma f_1)$ to the initial model ($\gamma_0$). The next iteration of the algorithm begins by constructing a new tree ($f_2$) from the training set $(\{\mathbf{x}_i \}_{i=1}^n)$ and the residual from the previous iteration $(\{y_i-\gamma_0-\gamma f_1 (\mathbf{x}_i ) \}_{i=1}^n )$. This new regression tree (after updating by the learning rate) plus the model from previous iteration make up the updated model, based on the second iteration. This process is repeated for each subsequent iteration until $M$ trees are constructed.

A GBDT model can be expressed as a linear model when we one hot encode each leaf in the trees to one dummy variable. To see this, we firstly give a review of how decision tree works. A decision tree partitions the feature space into a set of disjoint rectangles and fits a constant at each region. Let $\mathbf{1}_A$ be the indicator function on region $A$ that $\mathbf{1}_A (\mathbf{x}_i )=1$ when $\mathbf{x}_i \in A$ and $0$ otherwise. A decision tree can be expressed as
\begin{eqnarray}
f_m (\mathbf{x}_i )=\sum_{j=1}^{J_m}b_{jm}\mathbf{1}_{R_{jm}}(\mathbf{x}_i).
\end{eqnarray}
The $m$-th tree partitions the input space into $J_m$ disjoint regions $R_1,R_2,\cdots,R_{jm}$ and predicts a constant value ($b_{jm}$) in each region. We can further rewrite Equation (\ref{eqn202302150323}) as 
\begin{eqnarray}\label{eqn202302150324}
F_M (\mathbf{x}_i )=\gamma_0+\gamma\sum_{m=1}^M\sum_{j=1}^{J_m}b_{jm} \mathbf{1}_{R_{jm}}(\mathbf{x}_i ) .
\end{eqnarray}
Equation (\ref{eqn202302150324}) indicates that the final model $F_M (\mathbf{x}_i )$  can be viewed as a linear model, with each $\mathbf{1}_{R_{jm}}  (\mathbf{x}_i )$ as a feature (dummy variable). The coefficient of each feature is the leaf ($b_{jm}$) value multiplied by the learning rate ($\gamma$).  Prediction value of the final model for each sample $\mathbf{x}_i $ is sum of each dummy variable ($\mathbf{1}_{R_{jm}} (\mathbf{x}_i )$) multiplied by its coefficient ($\gamma b_{jm}$), plus the intercept ($\gamma_0$).  We show an illustration of GBDT OHE in Figure \ref{fig202303132031}. 
\begin{figure}
\centering
\includegraphics[width=1\textwidth,height=0.4\textwidth]{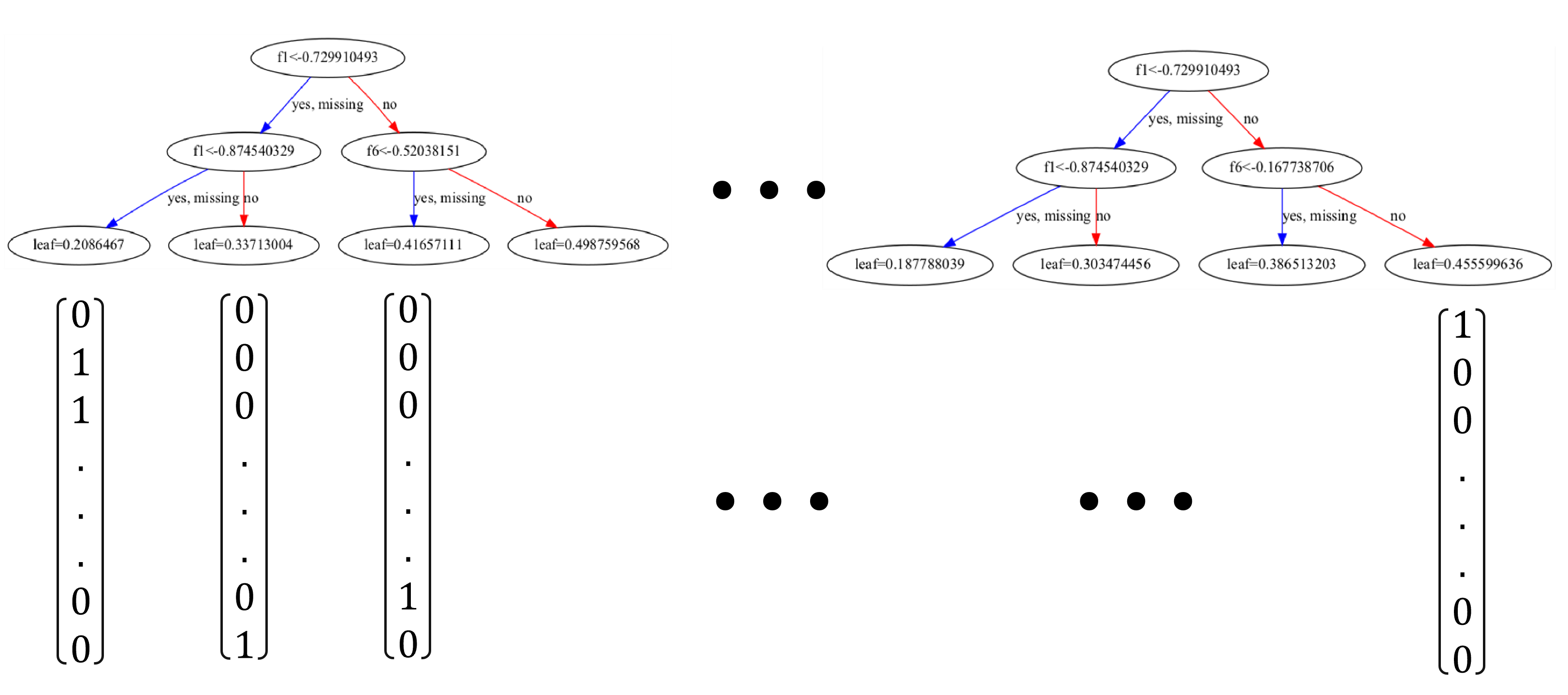}
\caption{One-hot encoding leaves of a specific GBDT model. Each leaf in each tree is one-hot encoded into a dummy variable.}\label{fig202303132031}
\end{figure}

This representation is similar with RuleFit proposed by \citet{friedman2008predictive} but not the same. In RuleFit, each node in a decision tree (including both leaf nodes and non-leaf nodes) is one-hot encoded as a dummy variable. Our method is distinguished from RuleFit in that we only one-hot encode the leaves of the gradient boosting trees into dummy variables, rather than all nodes. This modification is reasonable because the prediction of each gradient-boosted tree can be viewed as a linear combination of only the leaf nodes.

There are different ways to implement GBDT models, such as XGBoost proposed by \citet{chen2016xgboost}, LightGBM proposed by \citet{ke2017lightgbm}, and CatBoost proposed by \citet{prokhorenkova2018catboost}. All of them can be one-hot encoded using the method described in Equation (\ref{eqn202302150324}). In our study, we used XGBoost to implement GBDT models, specifically the xgboost module in Python, but this approach can be easily generalized to other implementation methods.

\subsection{Risk Decomposition}\label{subsec202303271457}
We further rewrite Equation (\ref{eqn202302150324}) as 
\begin{eqnarray}\label{eqn202303131234}
F_M (\mathbf{x}_i )=\sum_{k=0}^p b_k \phi_k(\mathbf{x}_i)=\Phi^\top(\mathbf{x}_i)\beta.
\end{eqnarray}
Each $\phi_k, k > 0$ is an indicator function of one unique leaf in the GBDT model and $\phi_0$ is a vector of all ones representing the constant term. $\Phi^\top(\mathbf{x}_i)=(\phi_0(\mathbf{x}_i),\phi_1(\mathbf{x}_i),\cdots,\phi_p(\mathbf{x}_i))^\top$ and $\beta=(b_0,b_1,\cdots,b_p)^\top.$ 

Now we treat GBDT as a feature engineering process and $\Phi(\mathbf{x})$ is the vector of features generated. Suppose $\hat\beta$ is the estimator of $\beta$.  $\hat\beta$ can be contributions of leaves in an XGBoost model or linear regression refitting coefficients with regularization. To see the robustness of the model, we suppose $\Delta\Phi$ is a random perturbation term that has zero mean and is independent of $\Phi$.  We can decompose the risk function as the following:
\begin{eqnarray}
&&E_\mathcal{X}E_\mathcal{D}(y-(\Phi^T(\mathbf{x})+\Delta\Phi^\top)\hat{\beta}))^2\nonumber\\
&=&E_\mathcal{X}E_\mathcal{D}(f(\mathbf{x})-(\Phi^T(\mathbf{x})\hat{\beta}))^2+E_\mathcal{X}E_\mathcal{D}(\Delta\Phi^T\hat\beta)^2+\sigma_\varepsilon^2\nonumber\\
&=&E_\mathcal{X}[f(\mathbf{x})-\Phi^\top(\mathbf{x})E_{\mathcal{D}}(\hat\beta)]^2+E_\mathcal{X}E_\mathcal{D}[\Phi^\top(\mathbf{x})E_{\mathcal{D}}(\hat\beta)-\Phi^\top(\mathbf{x})\hat\beta]^2+E_\mathcal{X}E_\mathcal{D}(\Delta\Phi^T\hat\beta)^2+\sigma_\varepsilon^2
\nonumber\\
&=:&(i)+(ii)+(iii)+(iv),\label{eqn202303241602}
\end{eqnarray}
where $(i)$ is model bias$^2$, $(ii)$ is model variance,
$(iii)$ is perturbation term, and $(iv)$ corresponds to  irreducible error term.
\begin{figure}
\centering
\includegraphics[width=0.5\textwidth,height=0.5\textwidth]{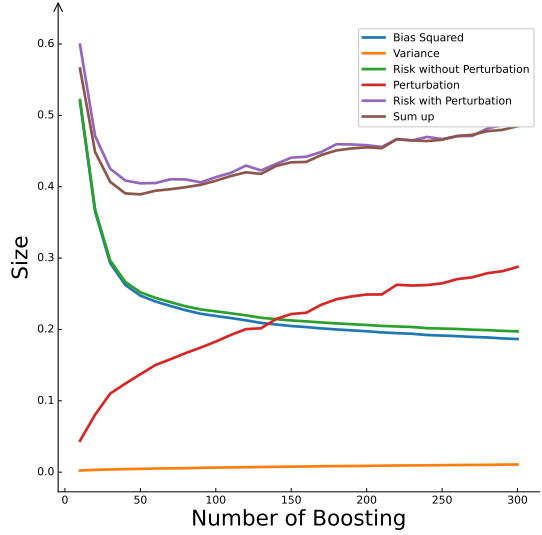}
\caption{Decomposition of risk function with perturbation.}\label{fig202303230220}
\end{figure}

Figure \ref{fig202303230220} provides an illustration of our proposed method for decomposing the risk function. The curve labeled ``Sum up"
 is the sum of curves $(i)-(iv)$, while the curve labeled ``Risk with Perturbation" corresponds to the risk calculated directly from the sample. As shown in Figure \ref{fig202303230220}, there is a small gap between the two curves, which is due to the violation of the independent assumptions made on the perturbation term. However, the difference between the two curves is negligible, and thus we can conclude that by decomposing the perturbation term, we are able to split the perturbation term out and effectively evaluate the robustness of a GBDT model.

\section{Ehnaching Robustness through Regularization}\label{sec202303271438}
\subsection{GBDT OHE upon Regularization}
In one-hot encoded GBDT, the number of leaves ($p$ in Equation (\ref{eqn202303131234})) can be similar to or much larger than the number of samples ($N$). Due to the number of leaves in GBDT, We are dealing with a very high dimensional model fitting problem, where  a large number of features are  created to reduce possible modeling bias. However, when the feature space is high-dimensional, a naive least squares estimator is prone to high variance, making regularization essential.  
The most well-known regularized estimators are based on the regularized $M$-estimation including the Ridge \citep{hoerl1970ridge} and the Lasso \citep{tibshirani1996regression}. 
These methods have attracted a large amount of theoretical and algorithmic studies. These methods can improve the accuracy of parameter estimates and prediction performance when feature space is large. Meanwhile, in this section, we show that regularization can improve not only the performance but also the robustness of the original GBDT model. We demonstrate how we refit a GBDT model with one-hot encoding and regularization and discuss its theoretical benefits on both robustness and performance.

 Suppose there are $p$ leaves and the coefficient of the $i$-th leaf is $b_i$. For each leaf, we one-hot encode it into a feature $\mathbf{1}_{R_k}$ and $\mathbf{1}_{R_k}(\mathbf{x}_i )=1$ when the $i$-th sample is on this leaf and 0 otherwise. We apply regularization methods including the Ridge and Lasso. The followings are the formulas of these methods.\\
Ridge regression estimator is defined as 
\begin{eqnarray}
    \{b_k\}_{k=0}^p=\arg\min_{\{b_k\}_{k=0}^p} \left\{\sum_{i=1}^n L\left(y_i,b_0+\sum_{k=1}^p b_k \mathbf{1}_{R_k}(X_i)\right)+\lambda\sum_{k=1}^p b_k^2\right\}.
\end{eqnarray}
Ridge regularization is a form of $L_2$ regularization that adds a penalty term to the objective function that is proportional to the sum of the squares of the weights.\\
Lasso estimator is defined to be
\begin{eqnarray}
    \{b_k\}_{k=0}^p=\arg\min_{\{b_k\}_{k=0}^p} \left\{\sum_{i=1}^n L\left(y_i,b_0+\sum_{k=1}^p b_k \mathbf{1}_{R_k}(X_i)\right)+\lambda\sum_{k=1}^p |b_k|\right\}.
\end{eqnarray}
Lasso regularization is a form of $L_1$ regularization that adds a penalty term to the objective function that is proportional to the sum of the absolute values of the weights.

Here $\{b_k\}_{k=1}^p$ represents the new refitted values of leaves, while $b_0$ denotes the common constant.

We outline our methodology and workflow for fitting a GBDT one-hot encoding model in Algorithm \ref{alg1}.

\begin{algorithm}[H]
\caption{}\label{alg1}
\begin{algorithmic}
\item[Step 1.] Find a well-tuned GBDT model (E.g. XGBoost model) to generate leaves. 
 \item [Step 2.] One-hot encode the GBDT model and delete the same leaves.
\item [Step 3.] Using the regularization method in linear regression to refit the coefficients. Apply regularization methods: Ridge ($L_2$), Lasso ($L_1$).
\end{algorithmic}
\end{algorithm}
Remark: Step 1 involves selecting a well-tuned GBDT model. For example, it can be done through randomized grid search with cross-validation.

\subsection{Regularization Effect on Robustness and Performance}
Risk decomposition in Section \ref{subsec202303271457} motivates us to apply regularization to control the perturbation term. Specifically, we may use $L_2$ regularization to control the perturbation term $(iii)$. Additionally, \citet{xu2008robust} gave evidence of using $L_1$ regularization for perturbation to gain robustness.
Instead of treating the perturbation term as a random term as we did in Section \ref{sec202302161144}, we can also consider a robust optimization problem under a fixed design scenario.
\begin{eqnarray}\label{eqn202302161207}
\min _{\beta \in \mathbb{R}^p}\left\{\max _{\Delta \Phi \in \mathcal{U}}\|y-(\Phi+\Delta \Phi) \beta\|_2\right\},
\end{eqnarray}
with
\begin{eqnarray}\label{eqn202302161208}
\mathcal{U} \triangleq\left\{\left(\Delta\Phi_1, \cdots, \Delta\Phi_p\right) \mid\left\|\Delta\Phi_i\right\|_2 \leq c_i, \quad i=1, \cdots, p\right\}.
\end{eqnarray}
\citet{xu2008robust} showed that the robust regression problem (\ref{eqn202302161207}) with uncertainty set of the form (\ref{eqn202302161208}) is equivalent to the $L_1$ regularized regression problem (\ref{eqn202303271316}).
\begin{eqnarray}
\min _{\beta \in \mathbb{R}^p}\left\{\|y-\Phi\beta \|_2+ c_i|\beta|\right\}\label{eqn202303271316}.
\end{eqnarray}
It is worth noting that if we modify the uncertainty set (\ref{eqn202302161208}) to
\begin{eqnarray}\label{eqn202302161430}
    \mathcal{U}_1 \triangleq\left\{\Delta \Phi \mid\left\|\Delta\Phi\right\|_2 \leq c\|\beta\|_2\right\},
\end{eqnarray}
or 
\begin{eqnarray}\label{eqn202302191008}
    \mathcal{U}_2 \triangleq\left\{\left(\Delta\Phi_1, \cdots, \Delta\Phi_p\right) \mid\left\|\Delta\Phi_i\right\|_2 \leq c|\beta_i|, \quad i=1, \cdots, p\right\},
\end{eqnarray}
 we can show the robust regression problem is equivalent to a $L_2$ regularized regression problem. Here  $\|A\|_2$ is the largest singular of a matrix $A$. $\|A\|_2$  boils down to $l_2$ norm of $A$ if $A$ is a vector. (\ref{eqn202302161430}) and (\ref{eqn202302191008}) depicts a case when perturbation size is always controlled by the signal level. The result is shown in Theorem 1.

\begin{theorem}
The robust regression problem (\ref{eqn202302161207}) with uncertainty set of the form (\ref{eqn202302161430}) or (\ref{eqn202302191008}) is equivalent to the following $L_2$ regularized regression problem:
\begin{eqnarray}
\min _{\beta \in \mathbb{R}^p}\left\{\|y-\Phi\beta \|_2+ c\|\beta\|^2_2\right\} .
\end{eqnarray}
\end{theorem}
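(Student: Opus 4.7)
The plan is to freeze $\beta$, compute the inner maximum $\max_{\Delta\Phi\in\mathcal{U}_j}\|y-(\Phi+\Delta\Phi)\beta\|_2$ in closed form for $j=1,2$, and show it equals $\|y-\Phi\beta\|_2 + c\|\beta\|_2^2$ in both cases. Once this identity is established pointwise in $\beta$, taking the outer minimum over $\beta$ immediately yields equivalence with $\min_{\beta}\{\|y-\Phi\beta\|_2+c\|\beta\|_2^2\}$, which is the stated $L_2$-regularized problem.

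For the upper bound, I would use the triangle inequality $\|y-(\Phi+\Delta\Phi)\beta\|_2 \le \|y-\Phi\beta\|_2 + \|\Delta\Phi\,\beta\|_2$ and then bound the perturbation term. Under $\mathcal{U}_1$, the operator-norm bound gives $\|\Delta\Phi\,\beta\|_2 \le \|\Delta\Phi\|_2\|\beta\|_2 \le c\|\beta\|_2^2$. Under $\mathcal{U}_2$, writing $\Delta\Phi\,\beta = \sum_{i=1}^p \beta_i\,\Delta\Phi_i$ and applying the triangle inequality column by column yields $\|\Delta\Phi\,\beta\|_2 \le \sum_i |\beta_i|\,\|\Delta\Phi_i\|_2 \le c\sum_i \beta_i^2 = c\|\beta\|_2^2$. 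Either way, the inner maximum is at most $\|y-\Phi\beta\|_2+c\|\beta\|_2^2$.

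For the matching lower bound I would exhibit an explicit perturbation. Assuming $y\neq\Phi\beta$ and $\beta\neq 0$, set $u = (y-\Phi\beta)/\|y-\Phi\beta\|_2$ and define, in the $\mathcal{U}_1$ case, $\Delta\Phi = -\,c\,u\,\beta^\top$, which is rank one with spectral norm exactly $c\|\beta\|_2$ and satisfies $\Delta\Phi\,\beta = -c\|\beta\|_2^2\,u$. In the $\mathcal{U}_2$ case, set $\Delta\Phi_i = -\,c\,|\beta_i|\,\mathrm{sgn}(\beta_i)\,u$, so that $\|\Delta\Phi_i\|_2 = c|\beta_i|$ and $\sum_i \beta_i\,\Delta\Phi_i = -c\|\beta\|_2^2\,u$. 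In both cases $y-(\Phi+\Delta\Phi)\beta = \bigl(1+c\|\beta\|_2^2/\|y-\Phi\beta\|_2\bigr)(y-\Phi\beta)$, whose norm is exactly $\|y-\Phi\beta\|_2+c\|\beta\|_2^2$, so the upper bound is attained.

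The main obstacle I anticipate is handling the degenerate case $y=\Phi\beta$, where $u$ is undefined. The fix is to replace $u$ with any fixed unit vector in $\mathbb{R}^n$ and verify that the same constructions still produce $\|\Delta\Phi\,\beta\|_2 = c\|\beta\|_2^2$, matching the identity since $\|y-\Phi\beta\|_2=0$. The case $\beta=0$ is trivial because the inner max reduces to $\|y\|_2$ and the regularizer vanishes. Combining the pointwise-in-$\beta$ equality of inner maximum and penalized residual, and minimizing both sides over $\beta\in\mathbb{R}^p$, completes the equivalence.
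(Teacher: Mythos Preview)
Your proposal is correct and follows essentially the same route as the paper: the upper bound via the triangle inequality plus an operator-norm (respectively columnwise) bound, and the lower bound via an explicit rank-one perturbation aligned with the residual direction $u=(y-\Phi\beta)/\|y-\Phi\beta\|_2$. If anything, your treatment is slightly more careful than the paper's, since you handle the degenerate cases $y=\Phi\beta$ and $\beta=0$ and spell out both $\mathcal{U}_1$ and $\mathcal{U}_2$, whereas the paper only details $\mathcal{U}_1$ and defers $\mathcal{U}_2$ to \citet{xu2008robust}.
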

\begin{proof}
We show the proof when the uncertainty set is  (\ref{eqn202302161430}). The proof when uncertainty set is (\ref{eqn202302191008}) is similar to proof of Theorem 1 in \citet{xu2008robust}.
\begin{eqnarray*}
&&\max _{\Delta \Phi \in \mathcal{U}}\|y-(\Phi+\Delta \Phi) \beta\|_2\\
&\leq&\|y-\Phi\beta\|_2+\max _{\Delta \Phi \in \mathcal{U}}\|\Delta\Phi\beta\|_2\\
&\leq&\|y-\Phi\beta\|_2+\max _{\Delta \Phi \in \mathcal{U}}\|\Delta\Phi\|_2\|\beta\|_2\\
&=&\|y-\Phi\beta \|_2+ c\|\beta\|^2_2
\end{eqnarray*}

Now, let
$\mathbf{u}=\dfrac{y-\Phi \beta}{\left\|y-\Phi \beta\right\|_2},$ and let
$\Delta\Phi^*_i\triangleq-c \beta_i \operatorname{sgn}\left(\beta_i\right) \mathbf{u},$ we have $\|\Delta\Phi^*\|_2=c\|\beta\|_2,$ therefore $\Delta\Phi^*\in\mathcal{U}$. Meanwhile,
\begin{eqnarray*}
&&\max _{\Delta \Phi \in \mathcal{U}}\|y-(\Phi+\Delta \Phi) \beta\|_2\\
&\geq&\max _{\Delta \Phi \in \mathcal{U}}\|y-(\Phi+\Delta \Phi^*) \beta\|_2\\
&=&\left\|y-\Phi\beta+c\|\beta\|_2^2 \mathbf{u}\right\|_2\\
&=&\|y-\Phi\beta \|_2+ c\|\beta\|^2_2.
\end{eqnarray*}
This completes the proof of the theorem.
\end{proof}

Here we do not make a definitive conclusion on which regularization method is better. The theoretical results are based on different assumptions that are usually hard to verify in practice. Instead, we emphasize that applying any form of regularization can improve the model robustness. In the later of this section and in  Section \ref{sec202303131258}, we present numerical results for both $L_1$ and $L_2$ regularization, which demonstrate that both regularization methods can effectively improve model robustness.

In practice, our goal is often to improve the robustness of the model without sacrificing too much performance. In addition, we also aim to evaluate the impact of refitting and regularization using GBDT OHE on model performance, as represented by the risk before perturbation. To guide our theoretical analysis, we propose several assumptions.

\emph{Assumption 1}:
$f(\mathbf{x})-E_\mathcal{X}(y|\Phi(\mathbf{x}))$ is uncorrelated with $\Phi(\mathbf{x})$.

Remark: We trivially have that $f(\mathbf{x})-E_\mathcal{X}(y|\Phi(\mathbf{x}))$ is uncorrelated with $E_\mathcal{X}(y|\Phi(\mathbf{x}))$, which is a function of $\Phi(\mathbf{x})$. Assumption 1 is a stronger assumption.

\emph{Assumption 2}:
$E(y|\Phi(\mathbf{x}))=\Phi^\top(\mathbf{x})\beta_0$.

Noticing that under Assumption 1, the model bias term $(i)$ can be further decomposed to 
\begin{eqnarray}
&&E_\mathcal{X}[f(\mathbf{x})-\Phi^\top(\mathbf{x})E_{\mathcal{D}}(\hat\beta)]^2\nonumber\\
&=&E_\mathcal{X}[f(\mathbf{x})-E_\mathcal{X}(y|\Phi(x)]^2+E_\mathcal{X}[E_\mathcal{X}(y|\Phi(x))-\Phi^\top(\mathbf{x})E_{\mathcal{D}}(\hat\beta)]^2\nonumber\\
&=:&(v)+(vi)\label{eqn202303241640},
\end{eqnarray}
where $(v)$ is the model misspecification bias and $(vi)$ is the model in-class bias.

Under Assumption 2,  we can further have
$$f(\mathbf{x})-E_\mathcal{X}(y|\Phi(x))=f(\mathbf{x})-\Phi^\top(\mathbf{x})\beta_0$$
and 
$$E_\mathcal{X}(y|\Phi(\mathbf{x}))-\Phi^\top(\mathbf{x})E_{\mathcal{D}}(\hat\beta)=\Phi^\top(\mathbf{x})(E_{\mathcal{D}}(\hat\beta)-\beta_0).$$

\begin{figure}
\centering
\includegraphics[width=0.5\textwidth,height=0.5\textwidth]{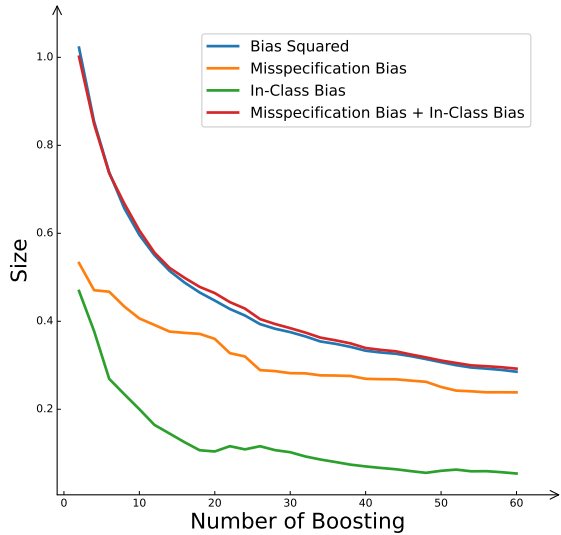}
\caption{Bias term decomposition on the simulation data.}\label{fig202303131316}
\end{figure}

In a GBDT model, the model misspecification bias decreases with increasing boosting number, but the in-class bias is uncertain due to the unknown best coefficient $\beta_0$. Figure \ref{fig202303131316} illustrates the decomposition of the GBDT model fitting on a simple regression model $y=x^2+\varepsilon, x \text{ and } \varepsilon\sim N(0,1)$, under some small boosting numbers, where a Ridge regression refitting of the coefficients is treated as the true coefficients. One-hot encoding a GBDT model does not change $\Phi(\bf{x})$, so the model misspecification bias remains unchanged when doing refitting. The regularization size affects the in-class bias and variance term of the GBDT one-hot encoding model. We need to point out that when the boosting numbers and the number of leaves are large, the Ridge estimator is no longer the best solution.  In Section \ref{sec202303131258}, we show that with the numerical study regularized GBDT one-hot encoding can decrease the in-class bias and variance terms of the original GBDT model with a relatively small regularization size. Additionally, the perturbation term can be decreased with a slight sacrifice of the in-class bias and variance terms.

To conclude this section, we present the risk decomposition results for Lasso ($L_1$) and Ridge ($L_2$) regularization using different regularization sizes. Specifically, we fit an XGBoost model to CHP data, apply one-hot encoding, and then apply $L_1$ and $L_2$ regularization, respectively. To evaluate the model's robustness, we add a 5\% perturbation to the testing data. Figure \ref{fig202303131938} shows the results. In the Figure, curve labelled "Sum up" shows the sum of the bias squared term, variance term, and perturbation term. Theoretically, it should be the same as the risk. The small gap between the risk curve and the sum up curve is due to the violation of the independence assumption between perturbation and data.  As shown in Figure \ref{fig202303131938}, the risk function has a U-shape, with the perturbation term decreasing as the regularization parameter size increases. It indicates that larger perturbation helps improve the model's robustness. However, the bias of the model will increase when the regularization parameter is large. In practice, we need to find a trade-off point to minimize the risk function.

\begin{figure}
	\centering
	\subfigure{
		\begin{minipage}[b]{0.4\textwidth}
			\includegraphics[width=1\textwidth,height=7cm]{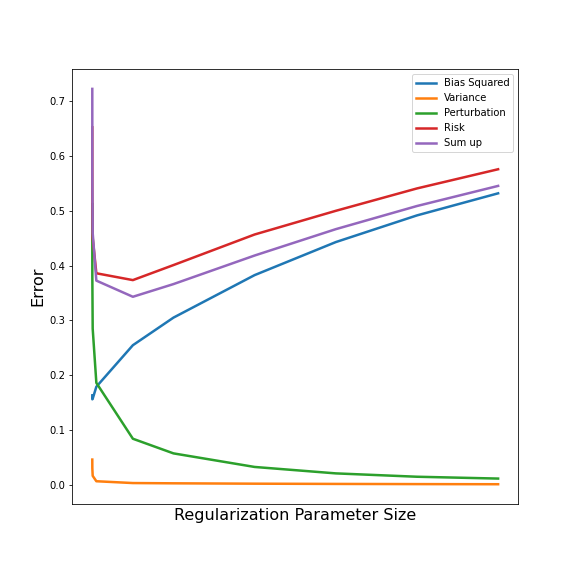}
		\end{minipage}
	   }
    	\subfigure{
    		\begin{minipage}[b]{0.4\textwidth}
   		 	\includegraphics[width=1\textwidth,height=7cm]{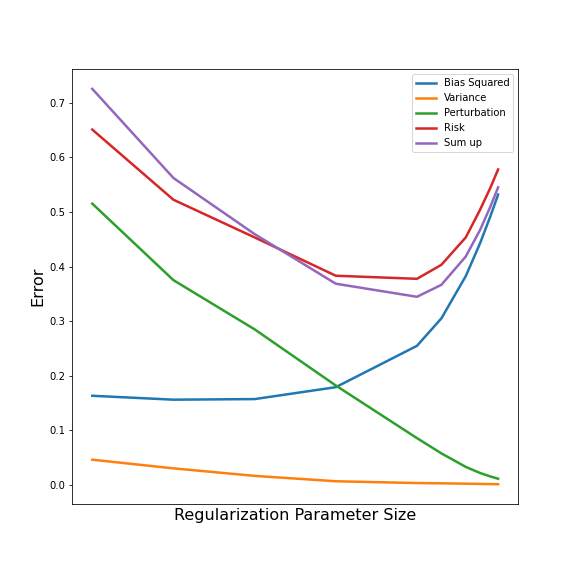}
    		\end{minipage}
    	}
    	\caption{Left: GBDT one-hot encoding with Lasso regularization. Right:  GBDT one-hot encoding with Ridge regularization. }\label{fig202303131938}
\end{figure}

\section{Numerical Study}\label{sec202303131258}
In this section, we show the performance and robustness of different methods using several real data sets. We use XGBoost for training gradient boosting models. Proper selection of hyperparameters is crucial for developing a robust XGBoost model. In this paper, we focus on several key hyperparameters that are known to have a significant impact on the final results on XGBoost models. These include n\_estimators (number of trees in the XGB model), max\_depth (maximum depth of each tree), learning\_rate (step size at which the algorithm makes updates to the XGBoost parameters), gamma (minimum loss reduction required for a further partition on a leaf node of a tree), reg\_alpha (hyperparameter for $L_1$ regularization) and reg\_lambda (hyperparameter for $L_2$ regularization).  In XGBoost, these hyperparameters can be classified into two types. The parameters n\_estimators, max\_depth, learning\_rate are related to the overall structure and behavior of an XGBoost model, while gamma, reg\_lambda, reg\_alpha are more related to regularization and model complexity. 
\begin{figure}[!]
	\centering
	  {(a) Test MSEs for Airfoil Data}\\
   \vskip-0.75cm
 \subfigure{
		\begin{minipage}[b]{0.6\textwidth}
			\includegraphics[height=9.4cm]{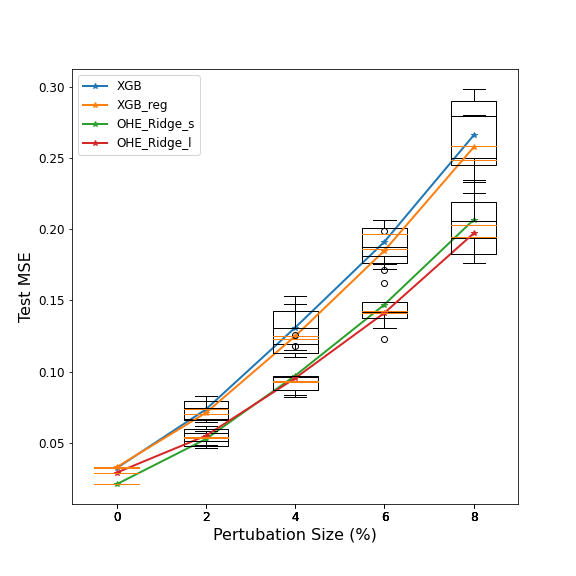}
		\end{minipage}
	   }\\
        {(b) Test MSE for CHP Data}\\
   \vskip-0.75cm
     \subfigure{
    		\begin{minipage}[b]{0.6\textwidth}
   		 	\includegraphics[height=9.4cm]{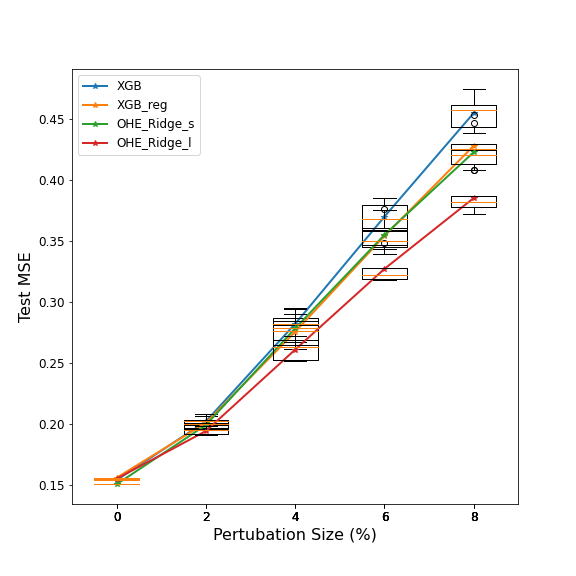}
    		\end{minipage}
    	}
    	\caption{Test MSE under different perturbation sizes. (a) is for Airfoil data, and (b) is for CHP data.}\label{fig202303241345}
\end{figure}

To create a well-tuned XGBoost model, we randomly split the raw data into training (80\%) and testing (20\%) sets, and used 5-fold cross-validation and randomized search to optimize the hyperparameters. To explore the effect of regularization parameters on the robustness of XGBoost, we trained two XGBoost models: one with  n\_estimators, max\_depth, learning\_rate tuned and with all regularization parameters set to 0 (XGB), and another with gamma, reg\_lambda, and reg\_alpha also optimized (XGB\_reg). We further fit GBDT OHE models by applying Ridge and Lasso regularization with different sizes (small, medium, and large) to the well-tuned XGBoost models. These models are denoted as OHE\_Ridge\_s, OHE\_Ridge\_m, OHE\_Ridge\_l, OHE\_Lasso\_s, OHE\_Lasso\_m, and OHE\_Lasso\_l. Final tuned parameters and model settings can be found in Appendix I.

\begin{table}\caption{Test MSE/Perturbation term of different methodologies}\label{tab202303241604}
\centering
\begin{tabular}{cccccc}\toprule
            &Airfoil(0\%)& Airfoil(2\%) & Airfoil(5\%)       \\\hline
XGB         & 0.032/0    & 0.074/0.046  & 0.156/0.134        \\
XGB\_reg    & 0.033/0    & 0.071/0.039  & 0.153/0.124        \\ \hline
OHE\_Ridge\_s    &0.020/0     & 0.053/0.032  & 0.120/0.099     \\ 
OHE\_Ridge\_m    & 0.021/0    & 0.052/0.029  & 0.119/0.092     \\
OHE\_Ridge\_l    & 0.029/0    & 0.054/0.025  & 0.117/0.083      \\\hline
OHE\_Lasso\_s    & 0.022/0    & 0.058/0.036  & 0.125/0.100     \\
OHE\_Lasso\_m    & 0.025/0    & 0.055/0.033  & 0.121/0.098       \\
OHE\_Lasso\_l    &0.026/0     & 0.056/0.031  & 0.120/0.096   \\
%
\toprule
             &CHP(0\%)& CHP(2\%)    &CHP(5\%)    \\\hline
XGB            &0.154/0& 0.202/0.058  & 0.324/0.180   \\
XGB\_reg      &0.155/0& 0.201/0.053    & 0.316/0.168     \\ \hline
OHE\_Ridge\_s      &0.151/0& 0.199/0.057 &0.318/0.173 \\ 
OHE\_Ridge\_m       &0.155/0& 0.194/0.039 & 0.295/0.131    \\
OHE\_Ridge\_l     &0.170/0& 0.201/0.028   & 0.287/0.102    \\\hline
OHE\_Lasso\_s       &0.151/0& 0.205/0.062  & 0.331/0.186    \\
OHE\_Lasso\_m     &0.153/0& 0.201/0.053  & 0.317/0.164 \\
OHE\_Lasso\_l    &0.179/0& 0.211/0.039  &0.305/0.125   \\
%
\toprule
            & BS(0\%) & BS(5\%) &BS(10\%)      &      \\\hline
XGB            & 0.159/0  & 0.215/0.059   &0.349/0.198     \\
XGB\_reg        & 0.160/0  & 0.212/0.057  &0.343/0.196      \\ \hline
OHE\_Ridge\_s         & 0.158/0  & 0.212/0.057   &0.345/0.197   \\ 
OHE\_Ridge\_m     & 0.159/0  & 0.212/0.053   &0.343/0.187   \\
OHE\_Ridge\_l       & 0.161/0  & 0.213/0.051   &0.342/0.181     \\\hline
OHE\_Lasso\_s       & 0.159/0  & 0.213/0.059  &0.349/0.200      \\
OHE\_Lasso\_m       & 0.158/0  & 0.212/0.056  &0.346/0.196     \\
OHE\_Lasso\_l       & 0.159/0  & 0.213/0.055  &0.346/0.193     \\\toprule
\end{tabular}
\end{table}

To evaluate the robustness of different models, we consider different perturbation sizes on the testing data and compare their performance. Specifically, we perturb each predictor $X_j$ by adding a normal distributed perturbation $\delta_j$ of size $\sigma$, resulting in a new predictor $\tilde{X}_j=X_j+\delta_j$. We then measure the test mean squared error (Test MSE) and the decomposed Perturbation term of each model under different perturbation sizes. Perturbation term is computed as a sample version of (iii) in Equation (\ref{eqn202303241602}). To account for randomness, we repeat the perturbation process five times for each size. We show the variability of the results on Airfoil and CHP data in Figures \ref{fig202303241345}. We show detailed Test MSE/Perturbation term of different models under different sizes of perturbation in Table \ref{tab202303241604}. A detailed introduction of the data sets we used can be found in Appendix II.

We have several observations based on our numerical study:
\begin{itemize}
    \item XGB\_reg can slightly improve the robustness of XGB when there is a perturbation. It can perform better than XGB on Airfoil (5\%), CHP(5\%), and BS(10\%), suggesting that models with no regularization are more sensitive to data changes.
    \item GBDT OHE  models with relatively small regularization sizes can keep or improve the performance of the tuned XGBoost models XGB or XGB\_reg. Specifically, the Test MSE of OHE\_Ridge\_s and OHE\_Lasso\_s on data set with no perturbation (0\%)  can generally keep or decrease the Test MSE compared with XGB and XGB\_reg when there is no perturbation. It is due to that refitting and regularization can decrease the model bias, especially the in-class bias term.
    \item GBDT OHE models with larger regularization tend to have worse performance than those with smaller regularization, mainly due to the increasing model bias. However, GBDT one-hot encoding models with relatively larger regularization sizes can provide more robust predictions. As the regularization size increases, the perturbation term decreases, which suggests that models with large regularization sizes are more robust for scenarios with larger perturbation sizes on unseen data.
    \item GBDT OHE with larger regularization sizes, the Test MSE grows slower compared with that with smaller regularization sizes. It can be seen in Figure \ref{fig202303241345}. 
    \item Compared to the regularized XGBoosting model XGB\_reg, GBDT OHE models with regularization typically exhibit better robustness in the presence of data perturbation. It can be found by that GBDT OHE models with larger regularization sizes always have better performance than XGB\_reg when there is a perturbation. 
\end{itemize}

\section{Concluding Remarks and Future Work}\label{sec202303291157}
In this study, we first demonstrate that traditional GBDT models are becoming less robust against perturbations on unseen data with number of boosting increasing. To address this issue, we propose a GBDT OHE representation that transforms a GBDT model into a linear framework, allowing for the use of a novel risk decomposition to analyze model robustness. We further introduce GBDT OHE with regularization to refit and regularize a GBDT model, and provide both theoretical and numerical results of its impact on model robustness and performance. Our study puts emphasis on the importance of evaluating model robustness and demonstrates the effectiveness of one-hot encoding with regularization in enhancing model robustness, which can have practical implications for developing more reliable and accurate machine learning models in various domains.

There are several directions for future work that can further enhance the applicability and performance of the method. In this paper, we only give theoretical evidence of regularization for regression data set with continuous response. One promising area is to investigate the theory of the method of classification data set. Additionally, exploring alternative regularization techniques that might further improve the robustness and performance of the one-hot encoded GBDT models is also a potential topic.

Another important direction for future work is to investigate the robustness of machine learning models beyond GBDT. While GBDT can be one-hot encoded into a linear model due to its property of splitting the data space into non-overlapping regions with constant predictions, not all machine learning models possess such a property. However, some models do have a simple local model, such as the ReLu neural network which has a local linear model \citep{sudjianto2020unwrapping}. An open problem is whether we can generalize our one-hot encoding method to machine learning methods with simple local models, to analyze and enhance their robustness.

\section*{Appendix}\label{sec202303291524}
\subsection*{Appendix I: Data Sets}
The following introduces the data sets we used for real data analysis.
\begin{itemize}
    \item[1.]	Airfoil. Airfoil dataset is a collection of data obtained from NASA’s aerodynamic and acoustic tests of two and three-dimensional airfoil blade sections. It includes measurements from various NACA 0012 airfoils at different wind tunnel speeds and angles of attack. The dataset contains 1503 observations and 6 variables, with the response variable being the scaled sound pressure level in decibels. The five independent variables include frequency in Hertz, angle of attack in degrees, chord length in meters, free-stream velocity in meters per second, and suction side displacement thickness in meters.

\item[2.] California Housing (CHP). CHP dataset is derived from the 1990 U.S. census and includes information on median house values for California districts. It is based on data from block groups, which are the smallest geographical units for which the U.S. Census Bureau publishes sample data. The dataset includes 20640 observations and the independent variables include longitude, latitude, housing median age, medium income, population, total rooms, total bedrooms and households.

\item[3.] Bike Sharing (BS). This dataset is collected from Capital bike share system. It contains information on the hourly and daily count of rental bikes between years 2011 and 2012. It has 17379 data points and originally had 16 predictors. We removed 2 non-relevant and 2 response-related variables, leaving us with 12 variables including hour, temperature, feel temperature, humidity, wind speed, season, working day, weekday, weather situation, year and month. 
\end{itemize}

\subsection*{Appendix II: Model Tuning Results}
Here are the final tuning parameter results of different models fitted in Section~\ref{sec202303131258}.
\begin{table}[h]
\centering
\begin{tabular}{cccc}\toprule
                                                  & Airfoil     &CHP & BS       \\\hline
XGB                    & \begin{tabular}[c]{@{}l@{}}max\_depth: 7\\ n\_estimators: 500\\learning\_rate: 0.15\end{tabular} 
& \begin{tabular}[c]{@{}l@{}}max\_depth: 6\\ n\_estimators: 600\\learning\_rate: 0.1\end{tabular}  
& \begin{tabular}[c]{@{}l@{}}max\_depth=6\\ n\_estimators:500\\learning\_rate: 0.07\end{tabular} \\\hline
XGB\_reg               & \begin{tabular}[c]{@{}l@{}}reg\_alpha: 0.3\\ reg\_lambda: 0.2\\ gamma: 0\end{tabular} & \begin{tabular}[c]{@{}l@{}}reg\_alpha: 1.75\\ reg\_lambda: 1\\ gamma: 0\end{tabular}   & \begin{tabular}[c]{@{}l@{}}reg\_alpha: 1.55\\ reg\_lambda: 0.5\\ gamma: 0.07\end{tabular}      \\ \hline
Ridge\_s&$\lambda_2=0.1$&$\lambda_2=100$&$\lambda_2=200$\\
Ridge\_m&$\lambda_2=1$&$\lambda_2=400$&$\lambda_2=400$\\
Ridge\_l&$\lambda_2=10$&$\lambda_2=1000$&$\lambda_2=600$\\\hline
Lasso\_s&$\lambda_1=6e^{-5}$&$\lambda_1=4e^{-4}$&$\lambda_1=2e^{-4}$\\
Lasso\_m&$\lambda_1=8e^{-5}$&$\lambda_1=7e^{-4}$&$\lambda_1=3e^{-4}$\\
Lasso\_l&$\lambda_1=10e^{-5}$&$\lambda_1=2e^{-3}$&$\lambda_1=4e^{-4}$\\\toprule
\end{tabular}
\end{table}

\bibliographystyle{apalike}
\bibliography{main}

\begin{thebibliography}{}

\bibitem[Andriushchenko and Hein, 2019]{andriushchenko2019provably}
Andriushchenko, M. and Hein, M. (2019).
\newblock Provably robust boosted decision stumps and trees against adversarial
  attacks.
\newblock {\em Advances in Neural Information Processing Systems}, 32.

\bibitem[Bartlett et~al., 2020]{bartlett2020benign}
Bartlett, P.~L., Long, P.~M., Lugosi, G., and Tsigler, A. (2020).
\newblock Benign overfitting in linear regression.
\newblock {\em Proceedings of the National Academy of Sciences},
  117(48):30063--30070.

\bibitem[Carlini et~al., 2019]{carlini2019evaluating}
Carlini, N., Athalye, A., Papernot, N., Brendel, W., Rauber, J., Tsipras, D.,
  Goodfellow, I., Madry, A., and Kurakin, A. (2019).
\newblock On evaluating adversarial robustness.
\newblock {\em arXiv preprint arXiv:1902.06705}.

\bibitem[Carlini and Wagner, 2017]{carlini2017towards}
Carlini, N. and Wagner, D. (2017).
\newblock Towards evaluating the robustness of neural networks.
\newblock In {\em 2017 ieee symposium on security and privacy (sp)}, pages
  39--57. Ieee.

\bibitem[Chen et~al., 2019]{chen2019robustness}
Chen, H., Zhang, H., Si, S., Li, Y., Boning, D., and Hsieh, C.-J. (2019).
\newblock Robustness verification of tree-based models.
\newblock {\em Advances in Neural Information Processing Systems}, 32.

\bibitem[Chen and Guestrin, 2016]{chen2016xgboost}
Chen, T. and Guestrin, C. (2016).
\newblock Xgboost: A scalable tree boosting system.
\newblock In {\em Proceedings of the 22nd acm sigkdd international conference
  on knowledge discovery and data mining}, pages 785--794.

\bibitem[Friedman and Popescu, 2008]{friedman2008predictive}
Friedman, J.~H. and Popescu, B.~E. (2008).
\newblock Predictive learning via rule ensembles.
\newblock {\em The annals of applied statistics}, pages 916--954.

\bibitem[Globerson and Roweis, 2006]{globerson2006nightmare}
Globerson, A. and Roweis, S. (2006).
\newblock Nightmare at test time: robust learning by feature deletion.
\newblock In {\em Proceedings of the 23rd international conference on Machine
  learning}, pages 353--360.

\bibitem[Goodfellow et~al., 2014]{goodfellow2014explaining}
Goodfellow, I.~J., Shlens, J., and Szegedy, C. (2014).
\newblock Explaining and harnessing adversarial examples.
\newblock {\em arXiv preprint arXiv:1412.6572}.

\bibitem[Hoerl and Kennard, 1970]{hoerl1970ridge}
Hoerl, A.~E. and Kennard, R.~W. (1970).
\newblock Ridge regression: Biased estimation for nonorthogonal problems.
\newblock {\em Technometrics}, 12(1):55--67.

\bibitem[Javanmard et~al., 2020]{javanmard2020precise}
Javanmard, A., Soltanolkotabi, M., and Hassani, H. (2020).
\newblock Precise tradeoffs in adversarial training for linear regression.
\newblock In {\em Conference on Learning Theory}, pages 2034--2078. PMLR.

\bibitem[Ke et~al., 2017]{ke2017lightgbm}
Ke, G., Meng, Q., Finley, T., Wang, T., Chen, W., Ma, W., Ye, Q., and Liu,
  T.-Y. (2017).
\newblock Lightgbm: A highly efficient gradient boosting decision tree.
\newblock {\em Advances in neural information processing systems}, 30.

\bibitem[Nicolae et~al., 2018]{nicolae2018adversarial}
Nicolae, M.-I., Sinn, M., Tran, M.~N., Buesser, B., Rawat, A., Wistuba, M.,
  Zantedeschi, V., Baracaldo, N., Chen, B., Ludwig, H., et~al. (2018).
\newblock Adversarial robustness toolbox v1. 0.0.
\newblock {\em arXiv preprint arXiv:1807.01069}.

\bibitem[Prokhorenkova et~al., 2018]{prokhorenkova2018catboost}
Prokhorenkova, L., Gusev, G., Vorobev, A., Dorogush, A.~V., and Gulin, A.
  (2018).
\newblock Catboost: unbiased boosting with categorical features.
\newblock {\em Advances in neural information processing systems}, 31.

\bibitem[Shamir, 2022]{shamir2022implicit}
Shamir, O. (2022).
\newblock The implicit bias of benign overfitting.
\newblock In {\em Conference on Learning Theory}, pages 448--478. PMLR.

\bibitem[Sudjianto et~al., 2020]{sudjianto2020unwrapping}
Sudjianto, A., Knauth, W., Singh, R., Yang, Z., and Zhang, A. (2020).
\newblock Unwrapping the black box of deep relu networks: interpretability,
  diagnostics, and simplification.
\newblock {\em arXiv preprint arXiv:2011.04041}.

\bibitem[Sudjianto et~al., 2023]{sudjianto2023piml}
Sudjianto, A., Zhang, A., Yang, Z., Su, Y., and Zeng, N. (2023).
\newblock Piml toolbox for interpretable machine learning model development and
  validation.
\newblock {\em arXiv preprint arXiv:2305.04214}.

\bibitem[Szegedy et~al., 2013]{szegedy2013intriguing}
Szegedy, C., Zaremba, W., Sutskever, I., Bruna, J., Erhan, D., Goodfellow, I.,
  and Fergus, R. (2013).
\newblock Intriguing properties of neural networks.
\newblock {\em arXiv preprint arXiv:1312.6199}.

\bibitem[Tibshirani, 1996]{tibshirani1996regression}
Tibshirani, R. (1996).
\newblock Regression shrinkage and selection via the lasso.
\newblock {\em Journal of the Royal Statistical Society: Series B
  (Methodological)}, 58(1):267--288.

\bibitem[Tsigler and Bartlett, 2020]{tsigler2020benign}
Tsigler, A. and Bartlett, P.~L. (2020).
\newblock Benign overfitting in ridge regression.
\newblock {\em arXiv preprint arXiv:2009.14286}.

\bibitem[Xu et~al., 2008]{xu2008robust}
Xu, H., Caramanis, C., and Mannor, S. (2008).
\newblock Robust regression and lasso.
\newblock {\em Advances in neural information processing systems}, 21.

\end{thebibliography}

\end{document}